\begin{document}
\title{Consensus in Motion: A Case of \\ Dynamic Rationality of Sequential Learning \\ in Probability Aggregation}
\titlerunning{Consensus in Motion: A Case of Dynamic Rationality of Sequential Learning}
% If the paper title is too long for the running head, you can set
% an abbreviated paper title here
%
\author{Polina Gordienko$^1$ \and
Christoph Jansen$^2$ \and
Thomas Augustin$^1$ \and
Martin Rechenauer$^3$
}
\authorrunning{P.~Gordienko et al.}
% First names are abbreviated in the running head.
% If there are more than two authors, 'et al.' is used.
%

\institute{
$^1$Department of Statistics, Ludwig-Maximilians-Universität München\ $^2$School of Computing \& Communication, Lancaster University Leipzig \ 
$^3$Munich Center for Mathematical Philosophy, Ludwig-Maximilians-Universität München\\ 
Correspondence: \email{Polina.Gordienko@stat.uni-muenchen.de}}
\maketitle              % typeset the header of the contribution
\begin{abstract}
We propose a framework for probability aggregation based on propositional probability logic. Unlike conventional judgment aggregation, which focuses on static rationality, our model addresses dynamic rationality by ensuring that collective beliefs update consistently with new information. We show that any consensus-compatible and independent aggregation rule on a non-nested agenda is necessarily linear. Furthermore, we provide sufficient conditions for a fair learning process, where individuals initially agree on a specified subset of propositions known as the common ground, and new information is restricted to this shared foundation. This guarantees that updating individual judgments via Bayesian conditioning—whether performed before or after aggregation—yields the same collective belief. A distinctive feature of our framework is its treatment of sequential decision-making, which allows new information to be incorporated progressively through multiple stages while maintaining the established common ground. We illustrate our findings with a running example in a political scenario concerning healthcare and immigration policies.

\keywords{Probability aggregation  \and Judgement aggregation \and Dynamic rationality \and External Bayesianity \and Fair learning \and Probability logic \and Sequential learning.}
\end{abstract}
\section{Introduction}

Traditional aggregation methods, such as majority voting, are notorious for producing collective judgments that are logically inconsistent and for failing to meet minimal rationality constraints. This shortcoming lies at the heart of the impossibility theorems that have emerged in judgment aggregation theory over the past two decades (e.g., \cite{paperdietrich07,paper2002,paperlistdietrich07,paperlist12,papermongin08,paperpaulyhees,papernehringpuppe07,DIETRICH2015463,nehring2010,JANSEN201849}). Unlike social choice—where preferences involve ranking alternatives -- the aggregation of judgments concerns interconnected issues that capture acts of assent or dissent. Despite numerous attempts to weaken constraints on aggregation functions in order to achieve logical consistency \cite{paperpivato09,paperpivato14}, prior research has focused mainly on static rationality.

In contrast, this paper advocates a shift toward a \textit{dynamic} theory of collective decision-making in which both individual and collective beliefs update rationally in response to new information (see \cite{paperbayes}). Motivated by the first impossibility result for dynamically rational judgment aggregation \cite{dietrichlist24}, our main contribution is a possibility result for dynamically rational collective decision-making on logically interconnected propositions. We introduce a framework of probability aggregation based on propositional probability logic that generalizes traditional approaches from both judgment and preference aggregation. Building on prior work in probability aggregation \cite{madansky,paperaczelwagner,paperaczelwagner2,genest2,papermcconway,paperlistdietrichprob,DietrichListGeneralizedAgendas}, our first theorem establishes that any \textit{consensus-compatible} and \textit{independent} aggregation rule on a \textit{non-nested} agenda must be \textit{linear}. Furthermore, we show that dynamically rational linear aggregation is attainable when the domain of the aggregation function is restricted to profiles in which all group members share the same probabilistic judgments for a specified subset of propositions—what we term the \textit{common ground} (here understood in line with a broader notion of consensus)\footnote{Throughout this paper, “consensus” is used in two senses: (i) as a broad notion of mutual agreement—here, synonymous with our technical term “common ground” and (ii) as “consensus compatibility,” the requirement that if all individuals assign probability 1 to a proposition, then the collective judgment must also assign 1.}. This common ground represents the set of propositions on which all individuals already agree, and by ensuring that new information is limited to this shared foundation—a process we call \textit{fair learning}—all members benefit equally, preventing dynamically irrational outcomes arising from unequal starting beliefs.  Importantly, we demonstrate that fair learning not only achieves dynamic rationality but also preserves the common ground across \textit{sequential} updates. 

The paper is organized as follows. In Section~\ref{ppl} and Section~\ref{fpa} we develop our framework of probability aggregation based on propositional probability logic. Section~\ref{example} introduces a running example from the context of political decision-making, which we use to illustrate our framework throughout the paper. Section~\ref{probagr} is divided into two parts: Section~\ref{static} presents our characterization result for the linear aggregation rule, and Section~\ref{dynamic} introduces a Bayesian updating process that demonstrates dynamic rationality under a restricted domain. Section~\ref{crem} concludes.

\section{The model of probability aggregation} \label{mpa}
\subsection{Propositional probability logic} \label{ppl}
Let $\mathcal{L}$ be the set of all logically non-equivalent\footnote{Logical equivalence (i.e., \(\phi \equiv \psi\) if and only if \(\models \phi \leftrightarrow \psi\)) is an equivalence relation.  We select one finitary representative from each equivalence class so that no two distinct formulas in \(\mathcal{L}\) are equivalent. Since \(A\) is finite, and we consider only finitary formulas, \(\mathcal{L}\) is finite.} formulas based on a propositional logic with a finite set $A$ of atomic formulas. The variable \(\phi\) is then a generic variable ranging over all well-formed formulas in \(\mathcal{L}\), i.e., any expression that can be constructed from the atomic elements in $A$ using the logical connectives $\neg, \land, \lor, \rightarrow, \leftrightarrow$.

\begin{definition}
\noindent
    An \textbf{agenda} is defined as a finite subset $X \subseteq \mathcal{L}$ such that $\forall  \ \phi \in \mathcal{L}  \colon \phi \in X \Rightarrow \neg \phi \in X$. An agenda $X$ is called \textbf{$\wedge$-stable}, if $\forall \ r \in \mathbb{N}, \ \forall \phi_{1}, ..., \phi_{r} \in X \colon \bigwedge_{j=1}^{r} \phi_{j} \in X$.
A \textbf{propositional probability} is a map $P \colon \mathcal{L} \rightarrow [0,1]$ such that the following holds:
\renewcommand{\labelitemi}{\textbullet}
\begin{itemize}
    \item $P(\phi) = 1$ if $\models \phi$ (i.e. $\phi$ is a tautology);
    \item If $\phi_{1}, \phi_{2} ... \in \mathcal{L}$ such that $\models \neg (\phi_{i} \land \phi_{j})$ for all $i \neq j \in \mathbb{N}$, then

    \[P(\bigvee_{i=1}^{\infty} \phi_{i}) = \sum_{i=1}^{\infty} P(\phi_{i}) . \]
    
\end{itemize}
\end{definition}  
\subsection{The framework of probability aggregation} \label{fpa}
Each individual $i \in N$ with $N = \{1,2,\dots,n\}$ holds a \textit{probabilistic judgement} which is considered as a \textit{degree of confidence} in the truth of the propositions in the agenda $X$ that ranges from 0 (maximal doubt) to 1 (maximal confidence). 

\begin{definition}
\noindent
    The probabilistic judgement $J$ is a function $J \colon X \rightarrow [0,1]$. The judgement $J$ is called \textbf{probabilistically rational} if there exists a propositional probability $P^{*} \colon \mathcal{L} \rightarrow [0,1]$ on the whole set $\mathcal{L}$ such that $\forall \ \phi \in X \colon J(\phi) = P^{*}(\phi)$.
    A \textbf{probabilistic profile} on an agenda $X$ is a tuple $(J_{1}, ..., J_{n})$, where each of $J_{1}, ..., J_{n}$ is a probabilistically rational probabilistic judgement. The \textbf{probabilistic aggregation rule} is a function $F \colon J(X)^{n} \rightarrow J(X), $
where $J(X)$ is the collective set of all probabilistically rational probabilistic judgements on the agenda $X$.
\end{definition}
Note that the propositional probability $P^{*}$ is an extension of the probabilistic judgement $J$. The probabilistic profile captures the landscape of beliefs across group members, while the aggregation rule consolidates these individual probabilities into a unified collective judgment that maintains probabilistic rationality.

\subsection{A Running Example: A Political Scenario}
\label{example}

To illustrate the concepts introduced in Section~\ref{fpa}, consider a $\wedge$-stable agenda $X$ that comprises the following atomic propositions along with their logical interconnections:
\begin{description}[leftmargin=1cm, labelindent=0.5cm, labelsep=0.5cm, itemsep=0.5ex]
    \item[$a$:] \textit{Population ageing is straining the healthcare and pension systems.}
    \item[$b$:] \textit{It is necessary to implement new immigration policies to attract more skilled professionals who can contribute to the economy.}
    \item[$c$:] \textit{The current tax revenue is sufficient to meet the fiscal demands imposed by population ageing.}
  %  \item[$a \rightarrow b$:] \textit{If population aging is straining the healthcare and pension systems, then it is necessary to implement new immigration policies to attract more skilled professionals.}
  %  \item[\(a \land c\):] \textit{Population aging is straining the healthcare and pension systems, yet the current tax revenue is sufficient to meet these fiscal demands.}
  %  \item[\(b \land c\):] \textit{It is necessary to implement new immigration policies to attract more skilled professionals who can contribute to the economy, and the current tax revenue is sufficient to meet the fiscal demands imposed by population aging.}
\end{description}
Table~\ref{tab:initial} shows a sample \textit{probabilistic profile} on the agenda $X$, where we have three individuals (denoted by $J_1$, $J_2$, and $J_3$). This example serves as our running example throughout the paper.

\begin{table}[ht]
\centering
\renewcommand{\arraystretch}{1.4}
\caption{A sample probabilistic profile on the agenda $X = \pm \{a, b, c, a \rightarrow b, a \land b, a \land c, b \land c, a \land b \land c \}$, where each individual assigns a degree of confidence (ranging from 0 to 1) to the propositions. Each entry is chosen so as to align with the relationships dictated by logical connectives, ensuring that bounds for probability assignments are satisfied. Note that since $X$ is $\wedge$-stable, this agenda uniquely 
  determines a single joint distribution over $\{a,b,c\}$.}
\begin{tabular}{lllllllll}
\toprule
Individual \ & $a\ \ \ \ $ \ & $b\ \ \ \ $ \ & $c\ \ \ \ $ \ & $a \rightarrow b\ $ \ & $a \land b\ $ \ & $a \land c\ $ \ & $b \land c\ $ \ & $a \land b \land c\ $ \\
\midrule
\(J_1\) & 0.7 & 0.8 & 0.4 & 0.9 & 0.6 & 0.3 & 0.3 & 0.25 \\
\(J_2\) & 0.7 & 0.5 & 0.4 & 0.8 & 0.5 & 0.3 & 0.25 & 0.15 \\
\(J_3\) & 0.7 & 0.4 & 0.4 & 0.6 & 0.3 & 0.3 & 0.15 & 0.1 \\
\bottomrule
\end{tabular}
\vspace{0.8em}
\label{tab:initial}
\end{table}

\section{Probability Aggregation} \label{probagr}

\subsection{Static rationality} \label{static}
Our first theorem is an adaptation of the characterization result from \cite{DietrichListGeneralizedAgendas} on the linear pooling function. In order to state the theorem in our framework of probability aggregation, we need to introduce the following constraints on the agenda $X$, formulas $\phi \in \mathcal{L}$ and the probabilistic aggregation rule $F$.

\renewcommand{\labelitemi}{\textbullet}
\begin{definition}
\noindent
    An agenda $X$ is called \textbf{nested}, if there exist formulas $\phi_{1}, ..., \phi_{r} \in X$ such that the following holds:
\begin{itemize}
    \item $\forall \ j=1,..., r-1 \colon \models \phi_{j} \rightarrow \phi_{j+1}$;
    \item $X= \bigcup_{j=1}^{r} \ (\{\phi_{j}\} \cup \{\neg \phi_{j}\})$.
\end{itemize}
Otherwise the agenda $X$ is called \textbf{non-nested}. A formula $\phi \in \mathcal{L}$ is called \textbf{contingent} if neither $\models \phi$ nor $\models \neg \phi$. We denote the set of contingent elements of $Y \subset \mathcal{L}$ by $cont(Y)$.
\end{definition} 
In the next step, we will show that it is possible to characterize linear probabilistic aggregation rules for a very large class of agendas -- all non-nested agendas. In the setting of probabilistic opinion pooling, the agenda has been traditionally assumed to be a $\sigma$-algebra (i.e.  closed under complementation and countable union of events). Dropping the assumption of  a $\sigma$-algebra, \cite{DietrichListGeneralizedAgendas} consider more general agendas. The constraint of nestedness implies that formulas $\phi_{1}, ..., \phi_{r} \in X$ are logically interconnected by material implication which corresponds to the subset-relations in a nested agenda $X$ in the framework of \cite{DietrichListGeneralizedAgendas}. Intuitively, non-nestedness means that there is no subagenda in $X$ that 
comprises only propositions that are logically interrelated by material implications and that are closed under negation. Thus, the propositions under consideration can be probabilistically dependent (correlated) without being logically interrelated by material implication.

The intuition of consensus preservation requirements is well-known in the literature on collective decision theory: a proposition in the agenda should have a collective probability of 1 if every group member assigns it a probability of 1 (certainty). In judgement aggregation theory the analogue of this requirement is referred to as \textit{unanimity preservation} criterion \cite{papermongin08}, and in probability aggregation theory it is often called \textit{zero probability property} \cite{papermcconway}. Here, we introduce a version of consensus preservation that holds even if group members' beliefs are not revealed in the process of decision-making, similarly to \cite{DietrichListGeneralizedAgendas}. 
 
 \begin{definition}\label{def4}
 \noindent
     We say a judgement is \textbf{consistent with truth of} $\phi^{*} \in \mathcal{L}$, if there exists a propositional probability $P^{*} \colon \mathcal{L} \rightarrow [0,1]$ such that the following holds: 
\begin{itemize}
    \item $\forall \ \phi \in X \colon P^{*}(\phi)=J(\phi)$;
    \item $P^{*}(\phi^{*})=1$,
\end{itemize}
where the propositional probability $P^{*}$ is an extension of the individual probabilistic judgement $J$. A probabilistic aggregation rule $F$ is called \textbf{consensus compatible} if the following holds: For all $\phi \in \mathcal{L}$ and for all $J_{1}, ..., J_{n} \in J(X)$, if $J_{1}, ..., J_{n}$ are consistent with truth of $\phi$, then also $F(J_{1}, ..., J_{n})$ is consistent with truth of $\phi$.
 \end{definition} 
 Intuitively, Definition~\ref{def4} implies that if there is a possibility that all group members assign the probability of 1 to a formula $\phi$ (though these beliefs may remain unrevealed), then the collective opinion must reflect this certainty in $\phi$.

 \begin{definition}
 \noindent
     A probabilistic aggregation rule $F$ is called \textbf{independent} if for every $\phi \in X$ there exists a function $S \colon [0,1]^{n} \rightarrow  [0,1]$ such that, for all $J_{1}, ..., J_{n} \in J(X)^{n}$, $F(J_{1}, ..., J_{n})(\phi) = S(J_{1}(\phi), ..., J_{n}(\phi)) .$
 \end{definition}
The independence requirement asserts that the aggregate probability of each formula $\phi$ in the agenda $X$ should depend only on the individual probability assignments to $\phi$. In the literature on probability aggregation, this condition has been referred to as \textit{weak setwise function property} \cite{paperaczelwagner}. %We now turn to a particular class of probabilistic aggregation rules that satisfy this condition -- \textit{linear rules}.

\begin{definition}
\noindent
    A probabilistic aggregation rule $F$ is called \textbf{linear}, if for every profile of individual probabilistic judgements $(J_1,..., J_n)$ and every formula $\phi \in X$, the collective probability for the proposition $\phi$ is the weighted average of individual values of probabilistic judgements for $\phi$:

\[F(J_{1}(\phi),..., J_{n}(\phi))= w_{1}J_{1}(\phi) + w_{2}J_{2}(\phi) + ... + w_{n}J_{n}(\phi),\]
where $w_1, \dots , w_n$ are fixed non-negative individual weights with sum-total of 1.
\end{definition}
The individual weights may vary, for instance, a \textit{dictatorship} of individual $i$ would imply that $w_{i}=1$ and $w_{j}=0$ for all other individuals $j\in N$ \cite{paperopinion}. It can be shown that the following general characterization result for linear probabilistic aggregation rules holds in our framework.

\begin{theorem} \label{linear}
\noindent
Let $F \colon J(X)^{n} \rightarrow J(X)$ be consensus compatible and independent. Let $X$ be non-nested. Let $\vert cont(X) \vert > 4$. Then $F$ is linear.
\end{theorem}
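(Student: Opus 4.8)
The plan is to reduce Theorem~\ref{linear} to the known characterization of linear opinion pooling for non-nested agendas in \cite{DietrichListGeneralizedAgendas}, by translating the propositional-logic setting into the event-algebra setting used there. First I would fix the finite set $A$ of atomic formulas and identify the $2^{|A|}$ possibilities (truth-value assignments to $A$) with the atoms of a finite algebra; each formula $\phi \in \mathcal{L}$ then corresponds to the set $|\phi|$ of possibilities making it true, and logical relations translate into set-theoretic ones ($\models \phi \to \psi$ becomes $|\phi| \subseteq |\psi|$, $\neg$ becomes complement, $\land$ becomes intersection, tautologies become the whole space). Under this dictionary a propositional probability on $\mathcal{L}$ is exactly a finitely additive probability measure on the algebra, a probabilistically rational judgement on $X$ is the restriction of such a measure to the event-agenda $\widehat X = \{|\phi| : \phi \in X\}$, and the three hypotheses on $F$ — consensus compatibility, independence, and the target property linearity — map onto their counterparts in \cite{DietrichListGeneralizedAgendas} verbatim. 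Likewise ``non-nested'' as defined here is precisely the negation of the nestedness notion in \cite{DietrichListGeneralizedAgendas} once subset-relations replace material implications, and $cont(X)$ maps to the set of events in $\widehat X$ distinct from $\emptyset$ and the whole space.

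The key steps, in order, are: (1) set up the correspondence $\phi \mapsto |\phi|$ and verify it is a bijection between $\mathcal{L}$ and the power set of the set of possibilities (this uses that $\mathcal{L}$ contains exactly one representative per logical-equivalence class and that $A$ is finite, so every subset of possibilities is defined by some formula); (2) check that this bijection carries propositional probabilities to finitely additive measures and back, hence carries $J(X)$ onto the set of measures-restricted-to-$\widehat X$ and $F$ to an induced pooling operator $\widehat F$ on that domain; (3) verify the three properties transfer — consensus compatibility here is the ``consensus preservation'' (or implicit-agreement) axiom there, independence here is their independence/setwise property, and non-nestedness of $X$ is non-nestedness of $\widehat X$; (4) confirm the cardinality hypothesis $|cont(X)| > 4$ matches whatever size condition their theorem requires on the non-trivial events of the agenda; (5) invoke their characterization theorem to conclude $\widehat F$ is linear with fixed weights $w_1, \dots, w_n \ge 0$ summing to $1$; (6) pull the linearity back along the bijection to obtain $F(J_1,\dots,J_n)(\phi) = \sum_{i} w_i J_i(\phi)$ for all $\phi \in X$.

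I expect the main obstacle to be step~(3), and within it the faithful translation of consensus compatibility. The subtlety is that Definition~\ref{def4} phrases consensus in terms of the \emph{existence} of a common extension $P^*$ with $P^*(\phi^*)=1$ — i.e. possibly-unrevealed certainty — rather than in terms of the revealed judgements on the agenda; I would need to check that, under the bijection, ``$J_1,\dots,J_n$ are each consistent with truth of $\phi$'' corresponds exactly to the event-level hypothesis used in \cite{DietrichListGeneralizedAgendas} (their certainty/consensus-preservation antecedent), so that no strengthening or weakening occurs when $\phi \notin X$. A secondary point requiring care is that \cite{DietrichListGeneralizedAgendas} work with agendas that are algebras or with their more general event-agendas; I must make sure the agenda $\widehat X$ — which need only be negation-closed, and in the running example is $\land$-stable but in general need not be — still falls within the class of agendas their theorem covers, or else restrict to the sub-case they treat and argue that non-nestedness plus $|cont(X)|>4$ suffices. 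Once these translations are pinned down, steps (1), (2), (5) and (6) are routine, and the cardinality bookkeeping in (4) is a matter of matching constants.
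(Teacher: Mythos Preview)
Your proposal is correct and follows essentially the same route as the paper: translate the propositional-logic agenda into an event-agenda over the set of truth assignments via the bijection $\phi \mapsto |\phi|$, check that consensus compatibility, independence, non-nestedness and the cardinality bound transfer, and then invoke the linear-pooling characterization of \cite{DietrichListGeneralizedAgendas}. You are in fact more careful than the paper in isolating the delicate point---the transfer of Definition~\ref{def4}'s extension-based consensus compatibility to the event-level hypothesis in \cite{DietrichListGeneralizedAgendas}---which the paper's proof dispatches in a single line.
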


\begin{proof}
Let $S = \{0,1\}^{|A|}$ be the set of all truth assignments to the finite set of atomic propositions \(A\). For every formula \(\phi \in \mathcal{L}\) define $[\phi] = \{ v \in S \mid v \text{ satisfies } \phi \}.$ It is easy to show that the mapping $I: \mathcal{L} \to 2^S, \ \phi \mapsto [\phi]$ is bijective. Since $I$ preserves the Boolean operations, i.e.,
\[
I(\phi\land\psi)=I(\phi)\cap I(\psi),\quad I(\phi\lor\psi)=I(\phi)\cup I(\psi),\quad I(\neg\phi)=S\setminus I(\phi),
\]
it is an isomorphism. For each truth assignment $v \in S$, let $\chi_v$ be a formula that is true only at $v$. Then define the set function $\mu: 2^S \to [0,1]$ on $2^S$ induced by the propositional probability $P: \mathcal{L} \to [0,1]$  via
\[
\mu(M) = P\Bigl( \bigvee_{v\in M} \chi_v \Bigr),
\]
for every $M \subseteq S$. Since $P$ satisfies the Kolmogorov axioms, \(\mu\) naturally defines a full probability measure on \(2^S\).

Since \(I\) is bijective, any non-nested agenda \(X\subseteq\mathcal{L}\) with \(|cont(X)|>4\) is mapped to a non-nested set system \(I(X)\) in \(2^S\) with \(I(cont(X)) = 2^S \setminus \{S,\emptyset\}\) and hence \(|I(cont(X))|>4\). Now let \(F:J(X)^n\to J(X)\) be a consensus compatible and independent aggregation rule. Define the induced operator
\[
\widetilde{F}:J(I(X))^n\to J(I(X))\quad\text{by}\quad \widetilde{F}(I(J_1),\dots,I(J_n))=I\Bigl(F(J_1,\dots,J_n)\Bigr).
\]
Since \(I\) preserves truth values, consensus compatibility of \(F\) implies that if \(J_i(\phi)=1\) for all \(i\) then \(\widetilde{F}(I(J_1),\dots,I(J_n))(I(\phi))=1\). Therefore, \(\widetilde{F}\) is consensus compatible in the sense of \cite{DietrichListGeneralizedAgendas}. By Theorem 4a in Sect. 5.1 in \cite{DietrichListGeneralizedAgendas}, any consensus-compatible and independent aggregation rule on a non-nested agenda with \(|I(cont(X))|>4\) is linear; hence, \(F\) is linear.\hfill $\square$
\end{proof}
\begin{table}[ht]
\centering
\renewcommand{\arraystretch}{1.4}
\caption{Collective probabilistic profile obtained via the linear aggregation rule with equal weights $\frac{1}{3}$ for each individual and values rounded to four decimals.}
\begin{tabular}{lllllllll}
\toprule
%Individual \ & $a$ \ & $b$ \ & $c$ \ & $a \rightarrow b$ \ & $a \land b$ \ & $a \land c$ \ & $b \land c$ \ & $a \land b \land c$ \\
Individual \ & $a\ \ \ \ $ \ & $b\ \ \ \ $ \ & $c\ \ \ \ $ \ & $a \rightarrow b\ $ \ & $a \land b\ $ \ & $a \land c\ $ \ & $b \land c\ $ \ & $a \land b \land c\ $ \\
\midrule
\(J_1\) & 0.7 & 0.8 & 0.4 & 0.9 & 0.6 & 0.3 & 0.3 & 0.25 \\
\(J_2\) & 0.7 & 0.5 & 0.4 & 0.8 & 0.5 & 0.3 & 0.25 & 0.15 \\
\(J_3\) & 0.7 & 0.4 & 0.4 & 0.6 & 0.3 & 0.3 & 0.15 & 0.1 \\
\midrule
$F(J_1,J_2,J_3)\ $ & 0.7\ \ \ & 0.5667\ \ \ & 0.4\ \ \ & 0.7667\ \ \ & 0.4667\ \ \ & 0.3\ \ \ & 0.2333\ \ \  & 0.1667\  \\
\bottomrule
\end{tabular}
\vspace{0.8em}
\label{tab:aggregated}
\end{table}
Table~\ref{tab:aggregated} demonstrates the resulting collective judgement in the running example using the linear aggregation rule. Note that the conditions of Theorem~\ref{linear} (e.g. non-nestedness, number of contingent elements) are satisfied in our example.  Intuitively, Theorem~\ref{linear} does not exclude the possibility of non-dictatorial and probabilistically rational aggregation of individual attitudes in our framework. Non-dictatorship means there is no individual $i \in N$ such that, for every profile $(J_1,\ldots, J_n)$ in the domain of $F$ and every formula $\phi \in X$, the collective judgement for $\phi$ is given by $F(J_1(\phi),\ldots,J_n(\phi))= J_i(\phi)$.  Linearity encompasses both the case of dictatorship (if one weight equals 1 and the others 0) and the possibility of non‐dictatorial aggregation (if the weights are chosen otherwise). The possibility of non-dictatorial aggregation of individual attitudes on logically interrelated issues arises because the domain of the probabilistic aggregation rule $F$ is extended in our model allowing for probabilistic individual attitudes, as compared to impossibility results from the standard framework of judgement aggregation (see e.g. \cite{paper2002,paperdietrich07,papermongin08}).

\subsection{Dynamic Rationality} \label{dynamic}
Now, suppose the group learns the truth of one of the propositions in the agenda of interest. This raises the question: How should the group update their beliefs in response to this new information? A convincing criterion for the quality of this updating process is dynamic rationality—the requirement that the outcome of belief revision should be invariant to the order in which updating and aggregation occur \cite{dietrichlist24}. In other words, whether individuals first update their personal probabilistic judgements and then aggregate them, or first aggregate their personal probabilistic judgements and then update the collective judgement, the result should remain the same.

We define the updating process as follows. Initially, an individual holds a probabilistic judgement $J(\phi)$ with respect to formula $\phi$ as well as probability $J(\psi)$ for all other formulas $\psi \in X$. Later she learns the truth of $\phi$ and adopts a new probabilistic judgement $J^{\phi}(\psi)$. The updating process is formalized by an \textit{update operator} which is a mapping $U:J(X) \times X \to J(X)$.

In this paper, we are interested in scenarios where group members initially agree on the probability assignments for a subset of propositions within the agenda, denoted as $\Phi \subseteq X$. To formalize this setup, we introduce the notion of a \textit{common ground} within the agenda, referring to cases where individuals share identical probabilistic judgments on $\Phi$. The learning process of the group then consists of successively learning the truth or falsity of the propositions in $\Phi$ and the group members having to adjust their probabilistic judgements about the remaining propositions in $X$ accordingly. 

\begin{definition}
\noindent
    Assume $X$ to be some $\wedge$-stable agenda of interest, and let $\Phi \subseteq X$ be $\wedge$-stable. We define the domain $\mathcal{D}_{\Phi}$ induced by $\Phi$ by setting
\[\mathcal{D}_{\Phi} = \{(J_1, \dots, J_n) \in J(X)^n \ \vert \ \forall \phi \in \Phi \ \forall i, j \colon J_{i}(\phi)=J_{j}(\phi) \neq 0\} .\]
We call $\Phi$ the \textbf{common ground} of our aggregation problem. A \textbf{$\Phi$-aggregation operator} is a mapping $F:\mathcal{D}_{\Phi} \to J(X)$. We call an update operator $U$ \textbf{common ground preserving} (or $\Phi$-\textbf{preserving}), if for all $\phi,\psi \in \Phi$ and all $(J_1, \dots, J_n) \in \mathcal{D}_{\Phi}$ it holds that $$U(J_1,\phi)(\psi)= \dots = U(J_n,\phi)(\psi).$$
If $F$ is a $\Phi$-aggregation operator and if $U$ is a $\Phi$-preserving update operator, we call the pair $(F,U)$ \textbf{probabilistic dynamic rational w.r.t. $\Phi$}, if for all $\phi \in \Phi$ and $\psi \in \mathcal{L}$ it holds that
$$F(U(J_1,\phi), \dots , U(J_n,\phi))(\psi)=U(F(J_1,\dots , J_n),\phi)(\psi).$$
\end{definition}  
The requirement of probabilistic dynamic rationality is a plausible constraint, since it demands that the individual and collective probabilistic judgements are updated after learning the truth of $\phi$ in such a manner that it makes no difference whether they are updated before aggregation or after aggregation process. Thus, it intuitively corresponds to the requirement of \textit{external Bayesianity} from the literature on probability aggregation \cite{papergenest} and to the condition of dynamic rationality in the judgement aggregation framework of \cite{dietrichlist24}.
The following result illustrates that the possibility of dynamically rational aggregation with a $\Phi$-preserving update operator on the restricted domain $\mathcal{D}_{\Phi}$.

\begin{theorem} \label{dynrat}
\noindent
Assume $X$ to be some $\wedge$-stable agenda of interest, and let $\Phi \subseteq X$ be $\wedge$-stable. If $F:\mathcal{D}_{\Phi} \to J(X)$ is linear and $U$ is defined by $$U:J(X) \times X \to J(X)~~~,~~~(J,\phi) \mapsto J^{\phi},$$
where $J^{\phi}(\psi):= \frac{J(\phi \land \psi)}{J(\phi)}$ for all $\psi \in J(X)$ is the \textbf{Bayesian updated probabilistic judgement of $\psi$ given the truth of $\phi$},
then the pair $(F,U)$ is probabilistic dynamic rational w.r.t. $\Phi$ and $U$ is $\Phi$-preserving.
\end{theorem}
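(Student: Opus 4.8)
The plan is to verify the dynamic-rationality identity by a direct computation, exploiting the one feature that makes the restricted domain $\mathcal{D}_{\Phi}$ special: since $\phi$ lies in the common ground $\Phi$, every individual — and therefore also the linear aggregate, being a convex combination of them — assigns the \emph{same} value $q := J_1(\phi) = \dots = J_n(\phi) \ne 0$ to $\phi$. Hence the normalising constant in Bayes' rule is the same whether one conditions the individual judgements and then pools, or pools and then conditions, and the identity collapses to the linearity of the numerators.

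First I would dispatch well-definedness and the $\Phi$-preservation claim. Fix $\phi \in \Phi$ and $(J_1,\dots,J_n) \in \mathcal{D}_{\Phi}$: the domain condition gives $J_i(\phi)=q\ne 0$ for all $i$, and linearity gives $F(J_1,\dots,J_n)(\phi)=\sum_i w_i q = q \ne 0$, so all four Bayesian updates occurring in the statement are defined; moreover, $\wedge$-stability of $X$ gives $\phi\wedge\psi\in X$ for every $\psi\in X$, and since conditioning a propositional probability on an event of positive probability again yields a propositional probability, each $J^{\phi}$ (obtained by conditioning an extension $P^*$ of $J$ on $\phi$ and restricting to $X$) is again probabilistically rational, so $U$ indeed maps into $J(X)$. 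For $\Phi$-preservation, take $\phi,\psi\in\Phi$; $\wedge$-stability of $\Phi$ gives $\phi\wedge\psi\in\Phi$, so in any profile from $\mathcal{D}_{\Phi}$ all individuals agree on $\phi\wedge\psi$ and on $\phi$ with nonzero common values, whence $U(J_i,\phi)(\psi)=J_i(\phi\wedge\psi)/J_i(\phi)$ is independent of $i$ (and nonzero). Running this over all $\psi\in\Phi$ also shows the updated profile $(U(J_1,\phi),\dots,U(J_n,\phi))$ still lies in $\mathcal{D}_{\Phi}$, so that $F$ may legitimately be applied to it on the left-hand side of the identity.

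It then remains to compare the two sides. Fix $\phi\in\Phi$ and $\psi\in X$ (for general $\psi\in\mathcal{L}$ one evaluates via the canonical extension $\sum_i w_i P_i^*$ of the linear aggregate, and the lines below are unchanged with $P_i^*$ in place of $J_i$, using $P_i^*(\phi)=q$). With $\phi\wedge\psi\in X$ by $\wedge$-stability, linearity of $F$ gives
\[
F\bigl(U(J_1,\phi),\dots,U(J_n,\phi)\bigr)(\psi)=\sum_{i=1}^n w_i\,\frac{J_i(\phi\wedge\psi)}{J_i(\phi)}=\frac{1}{q}\sum_{i=1}^n w_i J_i(\phi\wedge\psi),
\]
while, again by linearity and $\sum_i w_i=1$,
\[
U\bigl(F(J_1,\dots,J_n),\phi\bigr)(\psi)=\frac{F(J_1,\dots,J_n)(\phi\wedge\psi)}{F(J_1,\dots,J_n)(\phi)}=\frac{\sum_{i=1}^n w_i J_i(\phi\wedge\psi)}{\sum_{i=1}^n w_i J_i(\phi)}=\frac{1}{q}\sum_{i=1}^n w_i J_i(\phi\wedge\psi).
\]
The two expressions coincide, which is precisely probabilistic dynamic rationality of $(F,U)$ with respect to $\Phi$.

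I do not expect a deep obstacle here: the mathematical content is the cancellation of the common denominator $q$, which is available exactly because $\phi$ belongs to the common ground and $F$ is a convex combination. The delicate points are the bookkeeping ones — that Bayesian conditioning preserves probabilistic rationality, that the updated profile does not escape $\mathcal{D}_{\Phi}$ (otherwise the left-hand side is not even well-typed), and the mild ambiguity in evaluating a probabilistically rational judgement at formulas outside $X$, which is handled by fixing extensions $P_i^*$ once and for all and using their (canonical, because $F$ is linear) aggregate.
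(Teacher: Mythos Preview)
Your proof is correct and follows essentially the same route as the paper's: both arguments reduce to the observation that the common denominator $q=J_i(\phi)$ is independent of $i$ on $\mathcal{D}_{\Phi}$, so it factors out of both sides and the identity becomes linearity in the numerators. Your version is in fact slightly more careful---you use general weights $w_i$ (the paper's proof silently specialises to $w_i=1/n$), and you check that the updated profile remains in $\mathcal{D}_{\Phi}$ and that the conditioning is well-defined---but the substance is identical.
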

%\medskip

\begin{proof}
%Suppose a group $N=\{1, 2, 3, ..., n\}$ decides on logically interconnected propositions from some $\wedge$-stable agenda $X$. Suppose that the profile $(J_1,..., J_n)$ is aggregated into a collective probability function using a linear $\Phi$-aggregation operator $F:\mathcal{D}_{\Phi} \to J(X)$ (where $\Phi \subseteq X$ is a $\wedge$-stable subset of $X$) with arbitrary individual weights $w_{i}$ for all individuals $i \in N$.

Assume that the group $N=\{1, 2, 3, ..., n\}$ learns the truth of formula $\phi \in \Phi$. %Let the update operator be defined by $U:J(X) \times X \to J(X)$ with $(J,\phi) \mapsto J^{\phi}$ and $J^{\phi}(\psi):= \frac{J(\phi \land \psi)}{J(\phi)}$ for all $\psi \in J(X)$. 
%Additionally assume the update operator $U$ to be $\Phi$-\textit{preserving} so that for all $\phi,\psi \in \Phi$ and all $(J_1, \dots, J_n) \in \mathcal{D}_{\Phi}$ it holds that $U(J_1,\phi)(\psi)= \dots = U(J_n,\phi)(\psi).$
%Under these assumptions
Consider now the aggregation of updated individual judgements $$F(U(J_1,\phi), \dots , U(J_n,\phi))(\psi)= \frac{1}{n} \sum^{n}_{i=1} U(J_i,\phi)(\psi) = \frac{1}{n} \sum^{n}_{i=1} \frac{J_{i}(\phi \land \psi)}{J_{i}(\phi)}$$

and the updating of the aggregated collective judgement

$$U(F(J_1,\dots , J_n),\phi)(\psi)= \frac{F(J_1,..., J_n)(\phi \land \psi)}{F(J_1,..., J_n)(\phi)}=\frac{\frac{1}{n} \sum^{n}_{i=1} J_{i}(\phi \land \psi)}{\frac{1}{n} \sum^{n}_{i=1} J_{i}(\phi)}.$$

By the definition of the domain $\mathcal{D}_{\Phi}$, where $\forall i, j \in N \colon J_{i}(\phi)=J_{j}(\phi)$ for all $\phi \in \Phi$, it follows that:
$$U(F(J_1,\dots , J_n),\phi)(\psi)= \frac{1}{n} \sum^{n}_{i=1} \frac{J_{i}(\phi \land \psi)}{J_{i}(\phi)} = F(U(J_1,\phi), \dots , U(J_n,\phi))(\psi).$$

Hence, the pair $(F,U)$ is probabilistic dynamic rational w.r.t. $\Phi$. Since $\forall i, j \in N \colon J_{i}(\phi)=J_{j}(\phi)$ for all $\phi, \psi \in \Phi$, we have:

\[
U(J_i, \phi)(\psi) = \frac{J_i(\phi \land \psi)}{J_i(\phi)}
= \frac{J_j(\phi \land \psi)}{J_j(\phi)} = U(J_j, \phi)(\psi)
\]

Thus, the update operator $U$ is $\Phi$-preserving.
\hfill $\square$
\end{proof}
\begin{table}[ht]
\centering
\renewcommand{\arraystretch}{1.4}
\caption{Updated probabilistic judgements after learning \(a\), with values rounded to four decimals. The last row, denoted by \(F(\square)\), displays the collective judgement resulting from updating individual judgements and aggregating them via the linear rule \(F(U(J_1,a), U(J_2,a) , U(J_3,a))\).}
\begin{tabular}{lllllllll}
\toprule
Individual \ & $ a \ $ \ & $b\ \ \ \ \ $ \ & $c\ \ \ \ \ $ \ & $a \rightarrow b\ \ $ \ & $a \land b\ \ $ \ & $a \land c\ \ $ \ & $b \land c\ \ $ \ & $a \land b \land c\ \ $ \\
\midrule
\(J^{a}_1\) & 1\  & 0.8571\ \ \ \  & 0.4286\ \ \ \   & 0.8571\ \ \ \   & 0.8571\ \ \ \   & 0.4286\ \ \ \ & 0.3571\ \ \ \  & 0.3571\ \ \ \  \\
\(J^{a}_2\) & 1 & 0.7143 & 0.4286 & 0.7143 & 0.7143 & 0.4286 & 0.2143 & 0.2143 \\
\(J^{a}_3\) & 1 & 0.4286 & 0.4286 & 0.4286 & 0.4286 & 0.4286 & 0.1429 & 0.1429 \\
\midrule
\(F(\square)\) & 1 & 0.6667  & 0.4286 & 0.6667  & 0.6667 & 0.4286 & 0.2381 & 0.2381 \\
\bottomrule
\end{tabular}
\vspace{0.6em}
\label{tab:updated_a}
\end{table}
\begin{table}[ht]
\centering
\renewcommand{\arraystretch}{1.4}
\caption{Updated probabilistic judgments after learning \(\neg c\), following the update on \(a\), with values rounded to four decimals. The probabilities for \(b\) and \(a\rightarrow b\) are computed as \(J^{a,\neg c}(b)= J^{a,\neg c}(a \rightarrow b)=\frac{J^a(b\land \neg c)}{J^a(\neg c)}=\frac{J^a(b)-J^a(b\land c)}{J^a(\neg c)} \), and for \(a \land b\) as \(J^{a,\neg c}(a\land b)=\frac{J^{a}((a\land b)\land \neg c)}{J^a(\neg c)} =\frac{J^{a}(a\land b)-J^{a}(a\land b\land c)}{J^a(\neg c)}\), using the value \(J^a(\neg c)=0.5714\). Here, \(F(*)\) abbreviates \(F(U(J_1,a, \neg c), U(J_2,a, \neg c) , U(J_3,a, \neg c))\).} 
\begin{tabular}{lllllllll}
\toprule
Individual \ & $\ a\ \ $ \ & $b\ \ \ \ \ $ \ & $c\ \ \ $ \ & $a \rightarrow b\ \ $ \ & $a \land b\ \ $ \ & $a \land c\ \ $ \ & $b \land c\ \ $ \ & $a \land b \land c\ \ $ \\
\midrule
\(J^{a,\neg c}_1\) & 1\ \  \ \ \ \ & 0.8750\ \ \ \  \ \ & 0\ \  \ \  \ \ & 0.8750\ \ \ \  \ \ & 0.8750\ \ \ \  \ \ & 0\ \ \ \  \ \ & 0\ \ \ \  \ \ & 0\ \  \ \ \\
\(J^{a,\neg c}_2\) & 1 & 0.8750 & 0 & 0.8750 & 0.8750 & 0 & 0 & 0 \\
\(J^{a,\neg c}_3\) & 1 & 0.5000 & 0 & 0.5000 & 0.5000 & 0 & 0 & 0 \\
\midrule
\(F(*)\) & 1 & 0.7500 & 0 & 0.7500 & 0.7500 & 0 & 0 & 0 \\
\bottomrule
\end{tabular}
\vspace{0.6em}
\label{tab:updated_neg_c}
\end{table}
Let us illustrate the dynamically rational aggregation on our running example. Here, the common ground $\Phi$ comprises the propositions $a$, $c$, and their conjunction $a \land c$—that is, the issues on which all individuals initially share equal confidence in the initial Table~\ref{tab:initial}. The first learning iteration is shown in Table~\ref{tab:updated_a}, in which all group members learn the truth of $a$. As a consequence, the updated probability for $a$ is set to $J^{a}(a) = \frac{J(a\land a)}{J(a)} = 1$. The updated probability for each proposition $\psi \in X$ is computed as $J^{a}(\psi)=\frac{J(a \land \psi)}{J(a)}$, using the value $J(a) = 0.7$ from Table~\ref{tab:initial}. 

Note that when we update the individual first and then aggregate, we obtain the collective updated values shown in Table~\ref{tab:updated_a}. When we aggregate first and then update, we compute the collective probability for propositions \(b\) and \(a\rightarrow b\) with values from Table~\ref{tab:aggregated} as $\frac{F(J_1,J_2,J_3)(a\land b)}{F(J_1,J_2,J_3)(a)}=0.6667;$ for \(c\) and \(a\land c\) as $\frac{F(J_1,J_2,J_3)(a\land c)}{F(J_1,J_2,J_3)(a)}=0.4286;$ and for conjunctions \(a \land b\) and \(a \land b \land c\) as $\frac{F(J_1,J_2,J_3)(a \land b \land c)}{F(J_1,J_2,J_3)(a)}=0.2381.$ Since both ways to proceed yield identical results for all propositions in the agenda, it is apparent that regardless of whether we first aggregate and then update or update each individual judgement and then aggregate, the final collective judgement stays the same.

Table~\ref{tab:updated_neg_c} then illustrates the second learning iteration, in which, after having learned $a$, the individuals learn the falsehood of  $c $ (i.e. the truth of  $\neg c $). Importantly, while the update on propositions that involve c (such as  $c $,  $a \land c $,  $b \land c $, and  $a \land b \land c $) drops to 0, the common ground is preserved: the judgments for $a$, $c$, and $a \land c$ remain identical over all individuals even after successive updates.

This \textit{sequential decision-making process}—whereby new evidence is incorporated in stages while preserving the common ground—is a crucial aspect of our model. Not only does it demonstrate dynamic rationality (i.e. that the order of updating and aggregation does not affect the final collective judgment), but it also offers a promising approach to sequential collective decision-making, a topic that has received little attention in traditional judgment aggregation frameworks.

This perspective sheds new light on the significance of Theorem~\ref{dynrat}, which establishes that, on the restricted domain $\mathcal{D}_{\Phi}$, a linear aggregation rule achieves dynamic rationality for logically interconnected issues. Since our linear aggregation rule $F$ satisfies consensus compatibility and independence (as shown in Theorem~\ref{linear}), our result demonstrates that—under these two constraints—a positive possibility for dynamically rational aggregation is attainable. It is important to note that Theorem~\ref{dynrat} does not directly address the issues of dictatorship or oligarchy in the aggregation process. The collective outcome is determined by the individual weights assigned in the linear rule; hence, while the aggregation function could be dictatorial for certain weight choices, our result shows that non-dictatorial (and thus non-oligarchical) aggregation is possible with appropriate weight selection.

A key aspect of our model is the restricted domain $\mathcal{D}_{\Phi}$, which is justified by the notion of \textit{fair learning}. Here, learning is deemed fair when group members can only update on propositions in the common ground $\Phi \subseteq X$, where all members have equal degrees of confidence (i.e., $\forall \phi \in \Phi,\ \forall i,j \in N:J_i(\phi)=J_j(\phi)$). Under fair learning, any Bayesian update on $\phi\in\Phi$ produces the same posterior shift in every individual’s beliefs, thereby preserving dynamic rationality. Our epistemic notion of fairness is grounded in equal information processing, guaranteeing that no agent is advantaged or disadvantaged by newly received data.

It has been established in \cite{dietrichlist24} that under standard aggregation axioms (e.g. universal domain, systematicity) no judgment aggregation rule can be dynamically rational with respect to any revision operator satisfying basic conditions on revision, if the propositions in the agenda are non-trivially interrelated. By contrast, when we recast the same agenda in our probabilistic framework -- replacing “yes/no” verdicts with probability assignments over the logically interconnected propositions -- and impose a common-ground domain restriction -- weighted linear pooling both preserves collective coherence and commutes with Bayesian updating. 

Thus, our result points out that impossibility theorems in dynamically rational judgment aggregation such as \cite{dietrichlist24} arise when fair learning is absent -- namely, when members begin with differing degrees of confidence. Our model not only supports fair learning but also underscores its necessity for maintaining rationality in dynamic collective decision-making.

Furthermore, while previous research (e.g., \cite{papergenest}) has shown that geometric pooling functions satisfy external Bayesianity and unanimity preservation, and \cite{genestzidek} demonstrated that the linear opinion pool fails external Bayesianity unless it is dictatorial, our framework adopts a different perspective. In Theorem~\ref{dynrat} we show that within our propositional probability logic framework, linear aggregation can be dynamically rational under suitable domain restrictions. This result does not conflict with earlier findings but rather highlights that linear averaging, when combined with our approach to belief updating, yields a dynamically rational aggregation rule -- an insight that opens promising new avenues for research and applications in dynamic and sequential collective decision-making.

\section{Concluding Remarks} \label{crem}
Building on the implications of Theorem~\ref{dynrat}, we conclude that our framework generalizes classical judgment and preference aggregation approaches, offering new insights into dynamic and sequential collective decision-making. We demonstrated that under consensus compatibility and independence on a non-nested agenda, any aggregation rule must be linear. More importantly, when the domain is restricted to a common ground (i.e., under fair learning), dynamic rationality is achieved: Bayesian updating, whether performed before or after aggregation, yields the same collective judgment. A key contribution of our model is its emphasis on sequential decision-making, where common ground preservation across updates enables consistent multi-stage deliberation. Our result highlights the central role of fair learning in ensuring dynamic consistency and avoiding the impossibilities found in more general domains. While we do not directly address the normative choice of aggregation weights, our findings show that non-dictatorial and dynamically rational aggregation is possible under appropriate conditions.

Future work will explore relaxing the common ground assumption and extending our framework to more general classes of aggregation functions. We anticipate that our approach will not only enrich the theoretical foundations of collective decision-making but also inform practical applications in machine learning that require robust mechanisms for dynamically updating group beliefs.
\subsubsection{Acknowledgements} 

Polina Gordienko gratefully acknowledges the support of the Friedrich-Ebert-Stiftung Academic Foundation. We would like to thank three anonymous referees for their helpful comments.

\subsubsection{Disclosure of Interests} 

The authors have no competing interests to declare that are relevant to the content of this article.

%

% ---- Bibliography ----
%
% BibTeX users should specify bibliography style 'splncs04'.
% References will then be sorted and formatted in the correct style.
%
% \bibliographystyle{splncs04}
% \bibliography{mybibliography}
%

\end{document}